\documentclass{article}

    \PassOptionsToPackage{numbers, compress}{natbib}

\usepackage[preprint]{neurips_2019}




\usepackage[utf8]{inputenc} 
\usepackage[T1]{fontenc}    
\usepackage{hyperref}       
\usepackage{url}            
\usepackage{booktabs}       
\usepackage{amsfonts}       
\usepackage{nicefrac}       
\usepackage{microtype}      
\usepackage{graphicx}
\usepackage{amsmath}
\usepackage{amssymb}
\usepackage{amsthm}
\usepackage{stmaryrd}
\usepackage{wrapfig}
\usepackage{multirow}
\usepackage{tabularx}

\usepackage{bm}
\newtheorem{theorem}{Theorem}
\newcommand\raisepunct[1]{\,\mathpunct{\raisebox{0.5ex}{#1}}}

\usepackage{xcolor}
\usepackage{appendix}

\newcommand{\mb}[1]{\mathbf{#1}} 

\makeatletter
\def\thanks#1{\protected@xdef\@thanks{\@thanks
        \protect\footnotetext{#1}}}
\makeatother

\title{DRUM: End-To-End Differentiable Rule Mining On Knowledge Graphs}

%

\author{
        Ali Sadeghian~\textsuperscript{*1},~\thanks{*Authors contributed equally}
        Mohammadreza Armandpour\textsuperscript{*2},
        Patrick Ding,\textsuperscript{2}
        Daisy Zhe Wang,\textsuperscript{1}
        \\
        \textsuperscript{1} Department of Computer Science, University of Florida
        \\
        \textsuperscript{2} Department of Statistics, Texas A\&M University 
        \\
        \texttt{\{asadeghian, daisyw\}@ufl.edu}, \texttt{\{armand, patrickding\}@stat.tamu.edu}
}

\begin{document}

\maketitle

\begin{abstract}
  In this paper, we study the problem of learning probabilistic logical rules for inductive and interpretable link prediction. Despite the importance of inductive link prediction, most previous works focused on transductive link prediction and cannot manage previously unseen entities. Moreover, they are black-box models that are not easily explainable for humans. We propose DRUM, a scalable and differentiable approach for mining first-order logical rules from knowledge graphs which resolves these problems. We motivate our method by making a connection between learning confidence scores for each rule and low-rank tensor approximation. DRUM uses bidirectional RNNs to share useful information across the tasks of learning rules for different relations. We also empirically demonstrate the efficiency of DRUM over existing rule mining methods for inductive link prediction on a variety of benchmark datasets.
\end{abstract}

\section{Introduction}
\label{sec:intro}

Knowledge bases store structured information about real-world people, locations, companies and governments, etc. Knowledge base construction has attracted the attention of researchers, foundations, industry, and governments \citep{dong2014knowledge, ellis2015overview, suchanek2007yago, vrandevcic2014wikidata}. Nevertheless, even the largest knowledge bases remain incomplete due to the limitations of human knowledge, web corpora, and extraction algorithms.

Numerous projects have been developed to shorten the gap between KBs and human knowledge. A popular approach is to use the existing elements in the knowledge graph to infer the existence of new ones. There are two prominent directions in this line of research: representation learning that obtains distributed vectors for all entities and relations in the knowledge graph \citep{ebisu2018toruse, schlichtkrull2018modeling,  socher2013reasoning}, and rule mining that uses observed co-occurrences of frequent patterns in the knowledge graph to determine logical rules \citep{chen2016ontological, galarraga2015fast}. An example of knowledge graph completion with logical rules is shown in Figure~\ref{fig:reasoning}. 

One of the main advantages of logic-learning based methods for link prediction is that they can be applied to both transductive and inductive problems while representation learning methods like that of \citet{bordes2013translating,sadeghian2016temporal} and \citet{Yang2015EmbeddingEA} cannot be employed in inductive scenarios. Consider the scenario in Figure~\ref{fig:reasoning}, and suppose that at training time our knowledge base does not contain information about Obama's family.  Representation learning techniques need to be retrained on the whole knowledge base in order to find the answer. In contrast rule mining methods can transfer reasoning to unseen facts.  

Additionally, learning logical rules provides us with interpretable reasoning for predictions which is not the case for the embedding based method. This interpretability can keep humans in the loop, facilitate debugging, and increase user trustworthiness. More importantly, rules allow domain knowledge transfer by enabling the addition of extra rules by experts, a strong advantage over representation learning models in scenarios with little or low-quality data. 

\begin{wrapfigure}{r}{0.5\textwidth}
  \begin{center}
    \includegraphics[width=0.48\textwidth]{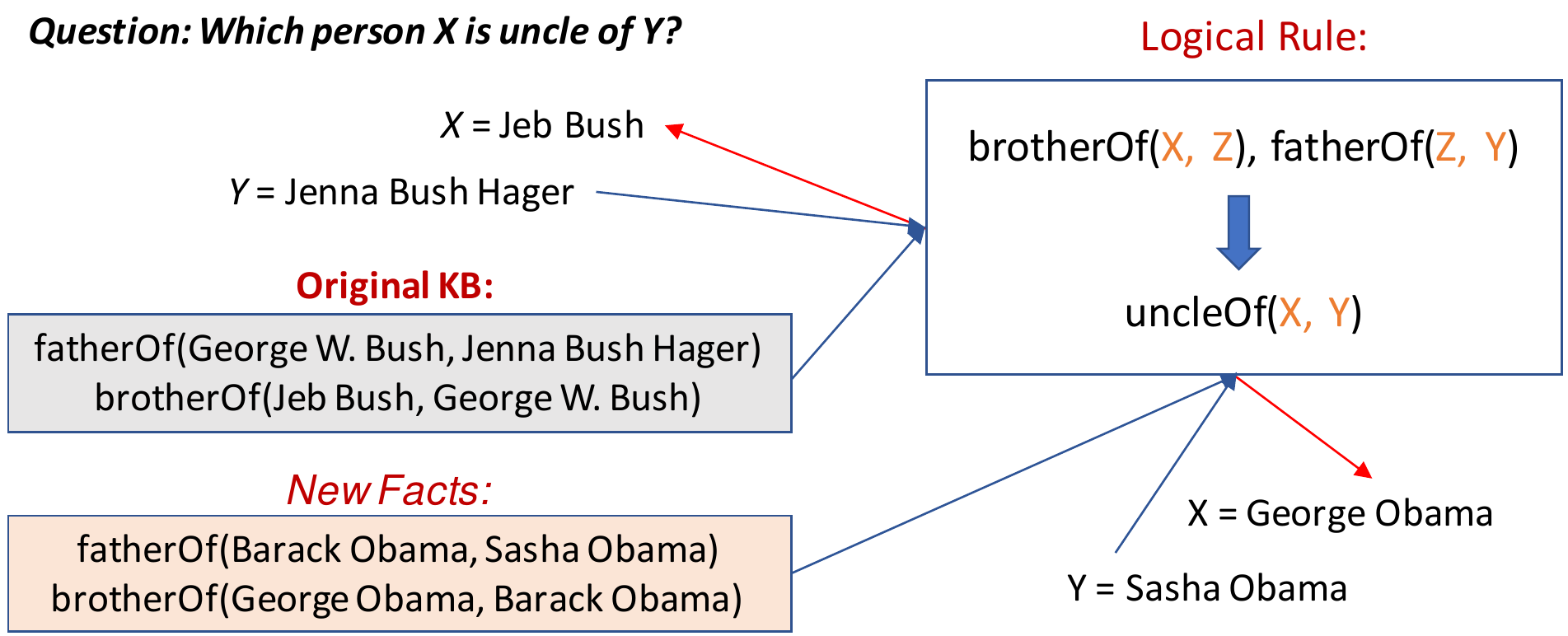}
  \end{center}
  \caption{Using logical rules for knowledge base reasoning}
  \vspace{-7pt}
  \label{fig:reasoning}
\end{wrapfigure}

Mining rules have traditionally relied on pre-defined statistical measures such as support and confidence to assess the quality of rules. These are fixed heuristic measures, and are not optimal for various use cases in which one might want to use the rules. For example, using standard confidence is not necessarily optimal for statistical relational learning. Therefore, finding a method that is able to simultaneously learn rule structures as well as appropriate scores is crucial. However, this is a challenging task because the method needs to find an optimal structure in a large discrete space and simultaneously learn proper score values in a continuous space. Most previous approaches address parts of this problem \citep{de2015inducing,kok2007statistical, lao2011random, wang2014structure} but are not able to learn both structure and scores together, with the exception of \citet{yang2017differentiable}.  

In this paper we propose DRUM, a fully differentiable model to learn logical rules and their related confidence scores. DRUM has significant importance because it not only addresses the aforementioned challenges, but also allows gradient based optimization to be employed for inductive logic programming tasks.

Our contributions can be summarized as: 1) An end-to-end differentiable rule mining model that is able to learn rule structures and scores simultaneously; 2) We provide a connection between tensor completion and the estimation of confidence scores; 3) We theoretically show that our formulation is expressive enough at finding the rule structures and their related confidences; 
4) Finally, we demonstrate that our method outperforms previous models on benchmark knowledge bases, both on the link prediction task and in terms of rule quality.

\section{Problem Statement}
\label{sec:problem_statement}

\noindent \textbf{Definitions} \quad We model a knowledge graph as a collection of facts $G=\{(s,r,o)| s,o\in \mathcal{E}, r\in \mathcal{R} \}$, where $\mathcal{E}$ and $\mathcal{R}$ represent the set of entities and relations in the knowledge graph, respectively.

A first order logical rule is of the form $\textbf{B}\Longrightarrow H$, where $\textbf{B} = \bigwedge_i B_i(\cdot \raisepunct{,} \cdot)$ is a conjunction of atoms $B_i$, e.g., $livesIn(\cdot \raisepunct{,} \cdot)$, called the \textit{Body}, and $H$ is a specific predicate called the head. A rule is  \textit{connected} if every atom in the rule shares at least one variable with another atom, and a rule is \textit{closed} if each variable in the rule appears in at least two atoms.

\textbf{Rule Mining} \quad We address the problem of learning first-order logical Horn clauses from a knowledge graph. In particular we are interested in mining closed and connected rules. These assumptions ensure finding meaningful rules that are human understandable. Connectedness also prevents finding rules with unrelated relations.

Formally, we aim to find all $T \in \mathbb{N}$ and relations $B_1, B_2, \cdots, B_T, H$ as well as a \textit{confidence value} $\alpha \in \mathbb{R}$, where:

\begin{equation}
B_1(x, z_1) \land B_2(z_1,z_2) \land \cdots \land B_{\scriptstyle_{T}}(z_{\scriptstyle_{T-1}}, y) \implies H(x,y) \, \, :\alpha,
\end{equation}

where, $z_i$s are variables that can be substituted with entities. This requires searching a discrete space to find $B_i$s and searching a continuous space to learn $\alpha$ for every particular rule. 

\section{Related work}
\label{sec:related_work}

Mining Horn clauses has been previously studied in the Inductive Logic Programming (ILP) field, e.g, FOIL~\cite{quinlan1990learning}, MDIE~\cite{muggleton1995inverse} and Inspire~\cite{schuller2018best}. Given a background knowledge base, ILP provides a framework for learning on multi-relational data. However, despite the strong representation powers of ILP, it requires both positive and negative examples and does not scale to large datasets. This is a huge drawback since most knowledge bases are large and contain only positive facts.

Recent rule mining methods such as AMIE+ \cite{galarraga2015fast} and Ontological Pathfinding~(OP)~\cite{chen2016ontological} use predefined metrics such as confidence and support and take advantage of various parallelization and partitioning techniques to speed up the counting process. However, they still suffer from the inherent limitations of relying on predefined confidence and discrete counting.

Most recent knowledge base rule mining approaches fall under the same category as ILP and OP. However,~\citet{Yang2015EmbeddingEA} show that one can also use graph embeddings to mine rules. They introduce DistMult, a simple bilinear model for learning entity and relation representations. The relation representations learned via the bilinear model can capture compositional relational semantics via matrix multiplications. For example, if the rule $B_1(x,y) \land B_2(y,z) \implies H(x,z)$ holds, then intuitively so should $\mb{A}_{B_1} \mb{A}_{B_2} \approx \mb{A}_{H}$. To mine rules, they use the Frobenius norm to search for all possible pairs of relations with respect to their compositional relevance to each head. In a more recent approach \citet{omran2018scalable} improve this method by leveraging pruning techniques and computing traditional metrics to scale it up to larger knowledge bases.

In~\cite{guu2015traversing} the authors proposed a RESCAL-based model to learn from paths in KGs. More recently, \citet{yang2017differentiable} provide the first fully differentiable rule mining method based on TensorLog \citep{cohen2016tensorlog}, Neural LP. They estimate the graph structure via constructing TensorLog operators per relation using a portion of the knowledge graph. Similar to us, they chain these operators to compute a score for each triplet, and learn rules by maximizing this score. As we explain in Section~\ref{sec:compact_formulation}, this formulation is bounded to a fixed length of rules. To overcome this limitation, Neural LP uses an LSTM and attention mechanisms to learn variable rule lengths. However, it can be implied from Theorem~\ref{thr:theorem_1} that its formulation has theoretical limitations on the rules it can produce.

There are some other interesting works \citep{das2017go, evans2018learning, minervini2018towards, rocktaschel2017end} which learn rules in a differentiable manner. However, they need to learn embeddings for each entity in the graph and they do link prediction not only based on the learned rules but also the embeddings. Therefore we exclude them from our experiment section. 

We did not cover embedding based methods on graphs or knowledge graphs here, we refer the reader to some of the recent works~\cite{hajiramezanali2019variational,hasanzadeh2019semi,kazemi2019relational,goyal2018graph} 

\section{Methodology}
\label{sec:methodology}

To provide intuition about each part of our algorithm we start with a vanilla solution to the problem. We then explain the drawbacks of the this approach and modify the suggested method step-by-step to makes the challenges of the problem more clear and provides insight into different parts of the suggested algorithm.

We begin by defining a one-to-one correspondence between the elements of $\mathcal{E}$ and $\{\mb{v}_1, ..., \mb{v}_n \}$, where $n$ is the number of entities and $\mb{v}_i \in \{0,1\}^n$ is a vector with $1$ at position $i$ and $0$ otherwise.  We also define $\mb{A}_{B_r}$ as the adjacency matrix of the knowledge base with respect to relation $B_r$; the $(i,j)^{\text{th}}$ elements of  $\mb{A}_{B_r}$ equals to $1$ when the entities corresponding to $\mb{v}_i$ and $\mb{v}_j$ have relation $B_r$, and $0$ otherwise.

\subsection{A Compact Differentiable Formulation}
\label{sec:compact_formulation}

To approach this inherently discrete problem in a differentiable manner, we utilize the fact that using the above notations for a pair of entities $(x,y)$ the existence of a chain of atoms such as
\begin{equation}
    B_1(x, z_1) \land B_2(z_1,z_2) \land \cdots \land     
    B_{\scriptstyle_{T}}(z_{\scriptstyle_{T-1}}, y)
\end{equation}

is equivalent to $\mb{v}_x^T \cdot  \mb{A}_{B_{1}} \cdot \mb{A}_{B_{2}} \cdots \mb{A}_{B_{\scriptstyle_{T}}} \cdot \mb{v}_y$ being a positive scalar.

This scalar is equal to the number of paths of length $T$ connecting $x$ to $y$ which traverse relation $B_{r_i}$ at step $i$. It is straightforward to show that for each head relation $H$, one can learn logical rules by finding an appropriate $\bm{\alpha}$ in
\begin{equation}
    \omega_{\scriptstyle_{H}}(\bm{\alpha})\doteq \sum_{s} \alpha_s \prod_{k \in p_s} \mb{A}_{B_k}
\end{equation}
that maximizes
\begin{equation}
    O_{\scriptstyle_{H}}(\bm{\alpha}) \doteq \sum_{(x,H,y) \in KG} \mb{v}_x^T \omega_{\scriptstyle_{H}}(\bm{\alpha}) \mb{v}_y,    
\end{equation}
where $s$ indexes over all potential rules with maximum length of $T$, and $p_s$ is the ordered list of relations related to the rule indexed by $s$.

However, since the number of learnable parameters in $O_{\scriptstyle_{H}}(\bm{\alpha})$ can be exceedingly large, i.e. $\mathcal{O}(|\mathcal{R}|^T)$, and the number of observed pairs $(x,y)$ which satisfy the head $H$ are usually small, direct optimization of $O_{\scriptstyle_{H}}(\bm{\alpha})$ falls in the regime of over-parameterization and cannot provide useful results. To reduce the number of parameters one can rewrite $\omega_{\scriptstyle_{H}}(\bm{\alpha})$ as
\begin{equation}
    \Omega_{\scriptstyle_{H}}(\mb{a}) \doteq \prod_{i=1}^T \sum_{k=1}^{|\mathcal{R}|} a_{i,k} \mb{A}_{B_k}.
\end{equation}
This reformulation significantly reduces the number of parameters to $T \mathcal{R}$.
However, the new formulation can only learn rules with fixed length $T$. To overcome this problem, we propose to modify $\Omega_{\scriptstyle_{H}}(\mb{a})$ to
\begin{equation}
    \Omega^I_{\scriptstyle_{H}}(\mb{a}) \doteq \prod_{i=1}^T (\sum_{k=0}^{|\mathcal{R}|} a_{i,k} \mb{A}_{B_k}),       
\end{equation}
where
we define a new relation $B_0$ with an identity adjacency matrix $A_{B_0} = I_n$. With this change, the expansion of $\Omega^I_{\scriptstyle_{H}}$ includes all possible rule templates of length $T$ or smaller with only $T (|\mathcal{R}|+1)$ free~parameters.

Although $\Omega^I_{\scriptstyle_{H}}$ considers all possible rules lengths, it is still constrained in learning the correct rule confidences. As we will show in the experiments Section~\ref{sec:rule_interpretability}, this formulation (as well as Neural~LP~\cite{yang2017differentiable}) inevitably mines incorrect rules with high confidences. The following theorem provides insight about the restricted expressive power of the rules obtained by $\Omega^I_{\scriptstyle_{H}}$.

\begin{theorem}
\label{thr:theorem_1}
If $R_o$, $R_s$ are two rules of length $T$ obtained by optimizing the objective related to $\Omega^I_{\scriptstyle_{H}}$, with confidence values $\alpha_o, \alpha_s$, then there exists $\ell$ rules of length $T$, $R_1, R_2, \cdots, R_{\ell}$, with confidence values $\alpha_1, \alpha_2, \cdots, \alpha_{\ell}$ such~that:
\begin{align*}
    & d(R_o, R_1) =d(R_{\ell}, R_s)=1 \quad \text{and} \quad  d(R_o, R_s) \leq \ell +1, \\
    & d(R_l, R_{l+1})=1 \quad \text{and} \quad \alpha_l \geq  \min(\alpha_o, \alpha_s) \quad \text{for} \quad 1 \leq l \leq \ell,
\end{align*}
where $d(.,.)$ is a distance between two rules of the same size defined as the number of mismatched atoms in their bodies.
\end{theorem}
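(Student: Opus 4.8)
The plan is to exploit the purely multiplicative structure that $\Omega^I_{\scriptstyle_H}$ imposes on confidence scores. First I would make the rule–coefficient correspondence explicit: expanding $\Omega^I_{\scriptstyle_H}(\mb{a}) = \prod_{i=1}^T (\sum_{k=0}^{|\mathcal{R}|} a_{i,k}\mb{A}_{B_k})$ produces a sum indexed by tuples $(k_1,\dots,k_T)\in\{0,\dots,|\mathcal{R}|\}^T$, the term for each tuple being $\big(\prod_{i=1}^T a_{i,k_i}\big)\,\mb{A}_{B_{k_1}}\cdots\mb{A}_{B_{k_T}}$. Thus every rule $R$ of length $T$ is identified with such a tuple and its confidence is the factorized product $\alpha(R)=\prod_{i=1}^T a_{i,k_i}$. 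Under DRUM's nonnegative parametrization all these coefficients, and hence all confidences, are $\ge 0$, which I will use to dispose of the degenerate cases.

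Next I would fix the post-optimization coefficients $\mb{a}$ and write $R_o=(o_1,\dots,o_T)$, $R_s=(s_1,\dots,s_T)$, letting $D=d(R_o,R_s)$ be the number of coordinates on which the two tuples disagree. The skeleton of the argument is to build a chain $R_o=P_0,P_1,\dots,P_D=R_s$ in which $P_m$ is obtained from $P_{m-1}$ by rewriting a single disagreeing coordinate from its $R_o$-value to its $R_s$-value. By construction $d(P_{m-1},P_m)=1$ for every $m$; setting $R_1:=P_1,\dots,R_\ell:=P_{D-1}$ gives $\ell=D-1$ interior rules with $d(R_o,R_1)=d(R_\ell,R_s)=1$ and $d(R_o,R_s)=D=\ell+1$, which already yields all the distance relations in the statement.

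The genuinely non-routine step, and the one I expect to be the main obstacle, is choosing the order in which the $D$ coordinates are flipped so that every intermediate confidence stays $\ge\min(\alpha_o,\alpha_s)$. When $\alpha_o>0$, flipping coordinate $i$ simply multiplies the running confidence by the ratio $a_{i,s_i}/a_{i,o_i}$ (the cases $\alpha_o=0$ or $\alpha_s=0$ are immediate from nonnegativity, since then $\min(\alpha_o,\alpha_s)=0$ and every product is $\ge 0$). I would then perform all flips whose ratio is $\ge 1$ first and all flips whose ratio is $<1$ afterward. In the first phase the confidence rises monotonically from $\alpha_o$ to a peak $M\ge\alpha_o$, so every value there is $\ge\alpha_o$; in the second phase it falls monotonically from $M$ to $\alpha_s$, so every value lies in $[\alpha_s,M]$ and is thus $\ge\alpha_s$. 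In both phases the confidence is therefore $\ge\min(\alpha_o,\alpha_s)$, which is exactly the required lower bound on $\alpha_1,\dots,\alpha_\ell$.

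Finally I would emphasize that this is really a structural statement about the product parametrization itself: because a rule's score factorizes across its atoms, one cannot assign high confidence to two rules without simultaneously assigning at least $\min(\alpha_o,\alpha_s)$ to an entire single-atom-at-a-time path joining them. Assembling the rule–coefficient correspondence, the two-phase ordering of the flips, the nonnegativity handling of the zero cases, and the elementary bookkeeping $\ell=D-1$ then completes the proof, with the two-phase ordering being the only ingredient that is not pure bookkeeping.
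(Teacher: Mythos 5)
Your proposal is correct, and it takes a genuinely different route from the paper's proof. Both arguments hinge on the same key observation — under $\Omega^I_{\scriptstyle_{H}}$ the confidence of a length-$T$ rule factorizes as $\prod_{i=1}^T a_{i,k_i}$, so single-atom substitutions act multiplicatively — but the constructions diverge after that. You build a \emph{direct geodesic} from $R_o$ to $R_s$: flip only the $D=d(R_o,R_s)$ disagreeing coordinates, scheduling all ratio-$\geq 1$ flips before all ratio-$<1$ flips so the confidence rises to a peak and then falls, which keeps every intermediate value above $\min(\alpha_o,\alpha_s)$. This yields $\ell = D-1$ and hence $d(R_o,R_s)=\ell+1$ with \emph{equality}, i.e., the shortest possible chain. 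The paper instead routes both rules through the globally maximal rule $S^*$ (whose body takes the coordinatewise argmax $a^*_{i,r^*_i}$ at each position): it climbs monotonically from $R_o$ up to $S^*$, climbs from $R_s$ up to $S^*$, reverses the second chain, and concatenates, obtaining $d(R_o,R_s)\leq \ell+1$ only via the triangle inequality. The paper's construction buys an extra structural fact — every learned rule is connected to the single top-confidence rule by a confidence-nondecreasing chain — at the cost of a generally longer chain; yours is tighter and needs no reference to a global maximizer, only a local ordering of the flips. Two further points in your favor: you make explicit the nonnegativity of the coefficients, which the paper's claim ``changing $B_{r_i}$ to $B_{r'_i}$ does not decrease confidence iff $a^*_{i,r_i}\leq a^*_{i,r'_i}$'' silently requires (it fails if the product of the remaining factors could be negative), and you dispose of the $\alpha_o=0$ or $\alpha_s=0$ degeneracies, which the paper's ratio-free phrasing also quietly skirts. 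One cosmetic gap shared with the paper: neither argument addresses the trivial cases $d(R_o,R_s)\leq 1$, where no intermediate rules exist and the statement is vacuous or needs $\ell=0$.
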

\begin{proof} The proof is provided in the appendix.
\end{proof}

To further explain Theorem \ref{thr:theorem_1}, consider an example knowledge base with only two meaningful logical rules of body length $T=3$, i.e. $R_o \text{ and } R_s$ such that they do not share any body atoms. According to Theorem~\ref{thr:theorem_1}, learning these two rules by optimizing $O^I_{\scriptstyle_{H}}(\mb{a})$ leads to learning at \mbox{least~$\ell \geq 2$} other rules, since $d(R_o, R_s)=3$, with confidence values greater than $\min(\alpha_o, \alpha_s)$. This means we inevitably learn at least $2$ additional \textbf{incorrect} rules with substantial confidence~values.

Theorem~\ref{thr:theorem_1} also entails other undesirable issues, for example the resulting list of rules may not have the correct order of importance. More specifically, a rule might have higher confidence value just because it is sharing an atom with another high confidence rule. Thus confidence values are not a direct indicator of rule importance. This reduces the interpretability of the output rules.

We must note that all previous differentiable rule mining methods based on $\Omega_{\scriptstyle_{H}}(\mb{a})$ suffer from this limitation. For example \citet{yang2017differentiable} has this limitation for rules with maximum length. Section~\ref{sec:rule_interpretability} illustrates these drawbacks using examples of mined rules.

\subsection{DRUM}
\label{sec:DRUM}

Recall that the number of confidence values for rules of length $T$ or smaller is $(|\mathcal{R}|+1)^T$. These values can be viewed as entries of a $T$ dimensional tensor where the size of each axis is $|\mathcal{R}|+1$. To be more specific, we put the confidence value of the rule with body $B_{r_1} \land B_{r_2} \land \cdots \land B_{r_T}$ at position $(r_1, r_2,\dots, r_T )$ in the tensor and we call it the \textit{confidence value tensor}. 

It can be shown that the final confidences obtained by expanding $\Omega^I_{\scriptstyle_{H}}(\mb{a})$ are a rank one estimation of the \textit{confidence value tensor}. This interpretation makes the limitation of $\Omega^I_{\scriptstyle_{H}}(\mb{a})$ more clear and provides a natural connection to the tensor estimation literature. Since a low-rank approximation (not just rank one) is a popular method for tensor approximation, we use it to generalize $\Omega^I_{\scriptstyle_{H}}(\mb{a})$. The $\Omega$ related to rank $L$ approximation can be formulated as
\begin{equation}\label{eq:omega}
    \Omega^L_{\scriptstyle_{H}}(\mb{a}, L) \doteq \sum^{L}_{j=1} \{ \prod_{i=1}^T \sum_{k=0}^{|\mathcal{R}|} a_{j,i,k} \mb{A}_{B_k}\}.
\end{equation}

In the following theorem, we show that $\Omega^L_{\scriptstyle_{H}}(\bm{a}, L)$ is powerful enough to learn any set of logical rules, without including unrelated ones.

\begin{theorem}
\label{thr:theorem_2}
For any set of rules $R_1, R_2, \cdots R_r$ and their associated confidence values $\alpha_1, \alpha_2, \cdots, \alpha_r$ there exists an $L^*$, and $\bm{a^*}$, such that:
\[
\Omega^L_{\scriptstyle_{H}}(\bm{a^*}, L^*)=\alpha_1 R_1+\alpha_2 R_2 \cdots + \alpha_r R_r.
\]
\end{theorem}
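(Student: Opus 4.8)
The plan is to exhibit an explicit $L^*$ and $\mb{a}^*$ by reading off a CP-style decomposition of the target directly from the definition of $\Omega^L_{\scriptstyle_{H}}$. First I would record the algebraic fact that drives everything: expanding the product inside each summand of \eqref{eq:omega} distributes over the inner sums, so that
\[
\prod_{i=1}^T \sum_{k=0}^{|\mathcal{R}|} a_{j,i,k}\, \mb{A}_{B_k} = \sum_{(k_1,\dots,k_T)} \left( \prod_{i=1}^T a_{j,i,k_i} \right) \mb{A}_{B_{k_1}} \cdots \mb{A}_{B_{k_T}},
\]
the sum ranging over all index sequences in $\{0,1,\dots,|\mathcal{R}|\}^T$. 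Hence $\Omega^L_{\scriptstyle_{H}}(\mb{a},L)$ is a weighted sum over all length-$T$ matrix products, where the weight attached to the path $(k_1,\dots,k_T)$ equals $\sum_{j=1}^L \prod_{i=1}^T a_{j,i,k_i}$ --- exactly the $(k_1,\dots,k_T)$ entry of a tensor written as a sum of $L$ rank-one outer-product terms. This is the precise sense in which $\Omega^L$ realizes the CP-rank-$\le L$ slice of the confidence value tensor, and it shows why the number of rules will control the required $L^*$.

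Next I would reduce each rule to a single path. Using the convention $\mb{A}_{B_0}=I_n$, any rule $R_m$ of length at most $T$ can be padded to a length-$T$ body, so it corresponds to a fixed index sequence $p_m = (r^{(m)}_1,\dots,r^{(m)}_T)$ with $R_m = \mb{A}_{B_{r^{(m)}_1}} \cdots \mb{A}_{B_{r^{(m)}_T}}$, choosing a canonical placement of the identity factors. With this in hand I would set $L^* = r$ and, for the $j$-th component, take $a^*_{j,i,k}$ to be the indicator selecting $B_{r^{(j)}_i}$ at position $i$, folding the scalar $\alpha_j$ into a single coordinate so as to avoid $T$-th roots of possibly negative confidences:
\[
a^*_{j,1,k} = \alpha_j\,\mathbb{1}[k=r^{(j)}_1], \qquad a^*_{j,i,k} = \mathbb{1}[k=r^{(j)}_i] \quad (2 \le i \le T).
\]
By the displayed expansion, the $j$-th summand collapses onto the single path $p_j$ with weight $\alpha_j$, i.e. it equals $\alpha_j R_j$; summing over $j=1,\dots,r$ yields $\sum_{m=1}^r \alpha_m R_m$, as required.

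The one genuine subtlety, which I expect to be the main obstacle to state cleanly rather than to prove, is that the correspondence between rules and tensor positions is not injective: a shorter rule admits several length-$T$ index sequences, since the identity factor can occupy different slots, and a priori two distinct target rules could even collapse to the same matrix. I would handle this by fixing the canonical padding above, assigning each $R_m$ one well-defined path, and by noting that we only need \emph{existence} of some $(L^*,\mb{a}^*)$ achieving the matrix identity, so non-uniqueness of the representation is harmless. Verifying that the off-target paths contribute zero is immediate once each component is a pure indicator tensor, so it is routine; the conceptual content lives entirely in the rank interpretation of the first paragraph, which both supplies the bound $L^* = r$ and mirrors the low-rank tensor view motivating $\Omega^L$.
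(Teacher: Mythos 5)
Your proposal is correct and is essentially the paper's own proof: both set $L^* = r$, use indicator (Kronecker delta) coefficients that select the $i$-th body atom of rule $R_j$ in the $j$-th summand, fold the confidence $\alpha_j$ into the first position's coefficients, and pad shorter rules with the identity relation $B_0$ so each summand collapses to $\alpha_j R_j$. Your preliminary CP-decomposition expansion and the remark on canonical padding just make explicit what the paper leaves implicit; the construction itself is identical.
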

\begin{proof}
To prove the theorem we will show that one can find a $\mb{a^*}$ for $L^* = r$ such that the requirements are met. Without loss of generality, assume $R_j$ (for some $1\leq j \leq r$) is of length $t_0$ and consists of body atoms $B_{r_1}, B_{r_2}, \cdots, B_{r_{t_0}}$. By setting $a^*_{j,i,k}$ 
\begin{equation*}
  a^*_{j,i,k} =
  \begin{cases}
    \alpha_j \delta_{r_1}\scriptstyle{(k)} & \text{if $i = 1$} \\
    \delta_{r_i}\scriptstyle{(k)} & \text{if $1<i\leq t_0$} \\
    \delta_{0}\scriptstyle{(k)} & \text{if $t_0<i$}
  \end{cases}
\end{equation*}
it is easy to show that $\mb{a^*}$ satisfies the condition in Theorem~\ref{thr:theorem_2}. Let's look at $\Omega^L_{\scriptstyle_{H}}(\mb{a^*}, L^*)$ for each~$j$:
\[
     \prod_{i=1}^T  \sum_{k=1}^{|\mathcal{R}|} a^*_{j,i,k} \mb{A}_{B_k} = 
        \alpha_j \mb{A}_{B_{r_1}}\cdot \mb{A}_{B_{r_2}} \cdots \mb{A}_{B_{r_{t_0}}} \cdot \mb{I}\cdots \mb{I} = 
        \alpha_j R_j.
\]
Therefore $\Omega^L_{\scriptstyle_{H}}(\bm{a^*}, L^*)=\sum \alpha_j R_j$.
\end{proof}

Note the number of learnable parameters in $\Omega^L_{\scriptstyle_{H}}$ is now $LT(|\mathcal{R}|+1)$. However, this is just the number of free parameters for finding the rules for a single head relation, learning the rules for all relations in knowledge graph requires estimating $LT(|\mathcal{R}|+1)\cdot |\mathcal{R}|$ parameters, which is $\mathcal{O}(|\mathcal{R}|^2)$ and can be potentially large. Also, the main problem that we haven't addressed yet, is that direct optimization of the objective related to $\Omega^L_{\scriptstyle_{H}}$ learns parameters of rules for different head relations separately, therefore learning one rule can not help in learning others. 

Before we explain how RNNs can solve this problem, we would like to draw your attention to the fact that some pairs of relations cannot be followed by each other, or have a very low probability of appearing together. Consider the family knowledge base, where the entities are people and the relations are familial ties like \texttt{fatherOf, AuntOf, wifeOf}, etc. If a node in the knowledge graph is \texttt{fatherOf}, it cannot be \texttt{wife\_of} another node because it has to be male. Therefore the relation \texttt{wife\_of} never follows the  relation \texttt{father\_of}. This kind of information can be useful in estimating logical rules for different head relations and can be shared among them.

To incorporate this observation in our model and to alleviate the mentioned problems, we use $L$ bidirectional RNNs to estimate $a_{j,i,k}$ in equation~\ref{eq:omega}:
\begin{align} 
    &\mkern3mu \mb{h}^{(j)}_{i},\mkern3mu \mb{h}^{\prime(j)}_{T-i+1} =\mb{BiRNN}_j(\mb{e}_H,\mkern3mu \mb{h}^{(j)}_{i-1},\mkern3mu \mb{h}^{\prime(j)}_{T-i}), 
    \\ 
    &[a_{j,i,1},\cdots,\mkern3mu  a_{j,i, |\mathcal{R}|+1}] =      f_{\mb{\theta}}([\mkern3mu \mb{h}^{(j)}_{i},\mkern3mu \mb{h}^{\prime^{(j)}}_{T-i+1}]),
\end{align}
where $\mb{h}$ and $\mb{h}^\prime$ are the hidden-states of the forward and backward path RNNs, respectively, both of which are zero initialized. The subindexes of the hidden states denote their time step, and their superindexes identify their bidirectional RNN. $e_H$ is the embedding of the head relation $H$ for which we want to learn a probabilistic logic rule, and $f_{\theta}$ is a fully connected neural network that generates the coefficients from the hidden states of the RNNs.

We use a bidirectional RNN instead of a normal RNN because it is capable of capturing information about both backward and forward order of which the atoms can appear in the rule. In addition, sharing the same set of recurrent networks for all head predicates (for all $\Omega^L_H$) allows information to be shared from one head predicate to another.

\section{Experiments}
\label{sec:experiments}

In this section we evaluate DRUM on statistical relation learning and knowledge base completion. We also empirically assess the quality and interpretability of the learned rules.

We implement our method in TensorFlow~\cite{tensorflow2015-whitepaper} and train on Tesla K40 GPUs. We use ADAM \cite{kingma2014adam} with learning rate and batch size of 0.001 and 64, respectively. We set both the hidden state dimension and head relation vector size to 128. We did gradient clipping for training the RNNs and used LSTMs \cite{hochreiter1997long} for both directions. $f_{\theta}$ is a single layer fully connected. We followed the convention in the existing literature \citep{yang2017differentiable} of splitting the data into three categories of facts, train, and test. The code and the datasets for all the experiments will be publicly available.

\vspace{5pt}

\subsection{Statistical Relation Learning}
\label{sec:srl}

\begin{wraptable}{r}{0.51\textwidth}
\vspace{-20pt}
\caption{Dataset statistics for statistical relation learning}\label{tab:data_stat}
\begin{tabular}{lccc}
\toprule
& \textbf{\#Triplets} & \textbf{\#Relations} & \textbf{\#Entities} \\ 
\midrule
\textbf{Family}  & 28356                & 12                    & 3007               \\
\textbf{UMLS}    & 5960                 & 46                    & 135                \\
\textbf{Kinship} & 9587                 & 25                    & 104                \\
\bottomrule
\end{tabular}
\end{wraptable} 

\textbf{Datasets:} Our experiments were conducted on three different datasets \cite{kok2007statistical}. The Unified Medical Language System (UMLS) consists of biomedical concepts such as drug and disease names and relations between them such as diagnosis and treatment. Kinship contains kinship relationships among members of a Central Australian native tribe. The Family data set contains the bloodline relationships between individuals of multiple families. Statistics about each data set are shown in  Table~\ref{tab:data_stat}.

\vspace{5pt}

\label{sec:drum_vs_rulemining}

We compared DRUM to its state of the art differentiable rule mining alternative, Neural LP~\cite{yang2017differentiable}. To show the importance of having a rank greater than one in DRUM, we test two versions, DRUM-1 and DRUM-4, with $L=1$ and $L=4$ (rank 4), respectively. 

To the best of our knowledge, NeuralLP and DRUM are the only scalable \footnote{e.g., On the Kinship dataset DRUM takes 1.2 minutes to run vs +8 hours for NTP(-$\lambda$) \cite{rocktaschel2017end} on the same machine.} and differentiable methods that provide reasoning on KBs without the need to use embeddings of the entities at test time, and provide prediction solely based on the logical rules. Other methods like NTPs~\cite{minervini2018towards, rocktaschel2017end} and MINERVA~\cite{das2018go}, rely on \textit{some type of learned} embeddings at training and test time. Since rules are interpretable and embeddings are not, this puts our method and NeuralLP in fully-interpretable category while others do not have this advantage (therefore its not fair to directly compare them with each other). Moreover, methods that rely on embeddings (fully or partially) are prone to having worse results in inductive tasks, as partially shown in the experiment section. Nonetheless we show the results of the other methods in the appendix.

\begin{table}[h!]
    \centering
    \caption{Experiment results with maximum rule length 2 and 3}\label{tab:srl_results}
    \scalebox{0.95}{
    \begin{tabular}{lccccccccccccccc}
        \toprule
         & & \multicolumn{4}{c}{\textbf{Family}} & \multicolumn{4}{c}{\textbf{UMLS}} & \multicolumn{4}{c}{\textbf{Kinship}}
        \\
        \cmidrule(r){3-6} \cmidrule(r){7-10} \cmidrule(r){11-14}
         & & & \multicolumn{3}{c}{Hits@} & & \multicolumn{3}{c}{Hits@} & & \multicolumn{3}{c}{Hits@}
        \\
        \cmidrule(r){4-6} \cmidrule(r){8-10} \cmidrule(r){12-14}
         & & MRR & 10 & 3 & 1 & MRR & 10 & 3 & 1 & MRR & 10 & 3 & 1
        \\
        \midrule
        \multirow{3}{*}{$T=2$} & Neural-LP & .91 & .99 & .96 & .86 & .75 & .92 & .86 & .62 & \textbf{.62} & .91 & .69 & \textbf{.48}
        \\
          & DRUM-1 & .92 & \textbf{1.0} & .98 & .86 & .80 & .97 & .93 & .66 & .51 & .85 & .59 & .34
        \\
        & DRUM-4 & .94 & \textbf{1.0} & \textbf{.99} & .89 & \textbf{.81} & \textbf{.98} & \textbf{.94} & \textbf{.67} & .60 & \textbf{.92} & .69 & .44
        \\
        \midrule 
        \multirow{3}{*}{$T=3$} 
                 & Neural-LP & .88 & .99 & .95 & .80 & .72 & .93 & .84 & .58 & .61 & .89 & .68 & .46
         \\
         & DRUM-1 & .91 & .99 & .96 & .85 & .77 & .96 & .92 & .63 & .57 & .88 & .66 & .43

        \\
        & DRUM-4 & \textbf{.95} & .99 & .98 & \textbf{.91} &.80 & .97 & .92 & .66 & .61 & .91 & \textbf{.71} & .46
        \\
        \bottomrule
    \end{tabular}
    }
\end{table}

Table \ref{tab:srl_results} shows link prediction results for each dataset in two scenarios with maximum rule length two and three. The results demonstrate that DRUM empirically outperforms Neural-LP in both cases $T=2,3$. Moreover it illustrates the importance of having a rank higher than one in estimating confidence values. We can see a more than seven percent improvement on some metrics for UMLS, and meaningful improvements in all other datasets. We believe DRUM's performance over Neural LP is due to its high rank approximation of rule confidences and its use of bidirectional LSTM to capture forward and backward ordering criteria governing the body relations according to the ontology.

\subsection{Knowledge Graph Completion}

We evaluate our proposed model in inductive and transductive link prediction tasks on two widely used knowledge graphs WordNet \citep{kilgarriff2000wordnet, miller1995wordnet} and Freebase \citep{bollacker2008freebase}. WordNet is a knowledge base constructed to produce an intuitively usable dictionary, and Freebase is a growing knowledge base of general facts. In the experiment we use WN18RR \citep{dettmers2018convolutional}, a subset of WordNet, and FB15K-237~\citep{toutanova2015observed}, which both are more challenging versions of  WN18 and FB15K~\citep{bordes2013translating} respectively. The statistics of these knowledge bases are summarized in Table~\ref{table:kbc_dst_stats}. We also present our results on WN18  \citep{bordes2013translating} in the appendix.

\begin{wraptable}{r}{0.45 \textwidth}
\vspace{-4pt}
\caption{Datasets statistics for Knowledge base completion.}
\label{table:kbc_dst_stats}
\centering
\begin{tabular}{lll}
\toprule
 & \textbf{WN18RR} & \textbf{FB15K-237} \\
\midrule
\#Train & 86,835 & 272,155 \\
\#Valid & 3,034 & 17,535 \\
\#Test & 3,134 & 20,466\\
\#Relation & 11 & 237 \\
\#Entity & 40,943 &  14,541 \\
\bottomrule
\end{tabular}
\vspace{-10pt}
\end{wraptable}

For transductive link prediction we compare DRUM to several state-of-the-art models, including DistMult \citep{Yang2015EmbeddingEA}, ComplEx \citep{pmlr-v48-trouillon16}, Gaifman \citep{Niepert:2016:DGM:3157382.3157479}, TransE \citep{bordes2013translating}, ConvE
\citep{dettmers2018convolutional}, and most importantly Neural-LP. Since NTP(-$\lambda$)~\cite{rocktaschel2017end} are not scalable to WN18 or FB15K, we could not present results on larger datasets. Also dILP~\cite{evans2018learning}, unlike our method requires negative examples which is hard to obtain under Open World Assumption (OWA) of modern KGs and dILP is memory-expensive as authors admit, which cannot scale to the size of large KGs, thus we can not compare numerical results here.

In this experiment for DRUM we set the rank of the estimator $L=3$ for both datasets. The results are reported without any hyperparamter tuning. To train the model, we split the training file into facts and new training file with the ratio of three to one. Following the evaluation method in \citet{bordes2013translating}, we use filtered ranking; table \ref{tab:tran} summarizes our results.

\begin{table}[h!]
    \centering  
    \caption{Transductive link prediction results. The results are taken from \cite{lacroix2018canonical, yang2017differentiable} and \cite{sun2018rotate}}
    \label{tab:tran}
    \begin{tabular}{lccccccccc}
        \toprule
         & \multicolumn{4}{c}{\textbf{WN18RR}} & & \multicolumn{4}{c}{\textbf{FB15K-237}} 
        \\
        \cmidrule{2-5} \cmidrule{7-10}
         & & \multicolumn{3}{c}{Hits} & & & \multicolumn{3}{c}{Hits}
        \\
        \cmidrule{3-5} \cmidrule{8-10}
         & MRR & @10 & @3 & @1 & & MRR & @10 & @3 & @1 
        \\
        \midrule
        R-GCN \citep{schlichtkrull2018modeling} & -- & -- & -- & -- & & .248 & .417 & .258 & .153
        \\
        DistMult \citep{Yang2015EmbeddingEA} & .43 & 49 & .44 & .39 & & .241 & .419 & .263 & .155
        \\
        ConvE \citep{dettmers2018convolutional} & .43 & .52 & .44 & .40 & & .325 & .501 & .356 & .237
        \\
        ComplEx \citep{pmlr-v48-trouillon16} & .44 & .51 & .46 & .41 & & .247 & .428 & .275 & .158
        \\
        TuckER \citep{balavzevic2019tucker} & .470 & .526 & .482 & .443 & & .358 & .544 & .394 & .266\\
        ComplEx-N3 \citep{lacroix2018canonical} & .47 & .54 & -- & -- & & .35 & .54 & -- & --
        \\
        RotatE \citep{sun2018rotate} & .476 & .571 & .492 & .428 & & .338 & .533 & .375 & .241
        \\
        \midrule
        Neural LP \citep{yang2017differentiable} & .435 & .566 & .434 & .371 & & .24 & .362 & -- & --
        \\
        MINERVA \citep{das2018go} & .448 & .513 & .456 & .413 & & .293 & .456 & .329 & .217
        \\
        Multi-Hop \citep{lin2018multi} & .472 & .542 & -- & .437 & & .393 & .544 & -- & .329
        \\
        DRUM (T=2) & .435 & .568 & .435 & .370 & & .250 & .373 & .271 & .187
        \\ 
        DRUM (T=3) & .486 & .586 & .513 & .425 & & .343 & .516 & .378 & .255 
        \\ 
        \bottomrule
    \end{tabular}
\end{table}

The results clearly show DRUM empirically outperforms Neural-LP for all metrics on both datasets. 
DRUM also achieves state of the art Hit@1, Hit@3 as well as MRR on WN18RR among all methods (including the embedding based ones). 

It is important to note that comparing DRUM with embedding based methods solely on accuracy is not a fair comparison, because unlike DRUM they are black-box models that do not provide interpretability. Also, as we will demonstrate next, embedding based methods are not capable of reasoning on previously unseen entities.

For the inductive link prediction experiment, the set of entities in the test and train file need to be disjoint. To force that condition, after randomly selecting a subset of test tuples to be the new test file, we omit any tuples from the training file with the entity in the new test file. Table ~\ref{table:inductive} summarizes the inductive results for Hits@10. 

It is reasonable to expect a significant drop in the performance of the embedding based methods in the inductive setup. The result of Table \ref{table:inductive} clearly shows that fact for the TransE method. The table also demonstrates the superiority of DRUM to Neural LP in the inductive regime. Also for Hits@1 and Hits@3, the results of DRUM are about 1 percent better than NeuralLP and for the TransE all the values are very close to zero.
%
\begin{table}
\parbox{.45\linewidth}{
\centering
\caption{Inductive link prediction Hits@10 metrics.}
\label{table:inductive}
\begin{tabular}{l  c c}
\toprule
& WN18 & FB15K-237 \\
\midrule
TransE &0.01 & 0.53 \\
Neural LP &94.49  &27.97 \\
DRUM &\textbf{95.21}  &\textbf{29.13} \\
\bottomrule
\end{tabular}
}
\hfill
\parbox{.45\linewidth}{
\centering
\caption{Human assessment of number of consecutive
correct rules}
\label{tab:human_assessment}
\begin{tabular}{lcc}
\toprule
T=2        & Neural LP  & DRUM  \\ \midrule 
father  & 2       & 5    \\  
sister  & 3       & 10    \\ 
uncle    & 6      & 6    \\  
\bottomrule
\end{tabular}
}
\end{table}

\subsection{Quality and Interpretability of the Rules}
\label{sec:rule_interpretability}

As stated in Section~\ref{sec:intro}, an important advantage of rules as a reasoning mechanism is their comprehensibility by humans. To evaluate the quality and interpretability of rules mined by DRUM we perform two experiments. Throughout this section we use the \textit{family} dataset for demonstration purposes as it is more tangible. Other datasets like \textit{umls} yield similar results.

\newlength{\oldintextsep}
\setlength{\oldintextsep}{\intextsep}
\setlength{\intextsep}{-11pt}%
\setlength{\columnsep}{10pt}%
We use human annotation to quantitatively assess rule quality of DRUM and Neural LP. For each system and each head predicate, we ask two blind annotators\footnote{Two CS students. The annotators are not aware which system produced the rules.} to examine each system's sorted list of rules. The annotators were instructed to identify the first rule they perceive as erroneous. Table~\ref{tab:human_assessment} depicts the number of correct rules before the system generates an erroneous rule.

\setlength{\intextsep}{\oldintextsep} 
The results of this experiment demonstrate that rules mined by DRUM appear to be better sorted and are perceived to be more accurate.

We also sort the rules generated by each system based on their assigned confidences and show the three top rules\footnote{A complete list of top 10 rules is available in the supplementary materials.} in Table~\ref{tab:rule_examples}. Logically incorrect rules are highlighted by \textit{\textcolor{red}{italic-red}}. This experiment shows two of the three top ranked rules generated by Neural LP are incorrect (for both head predicates $wife$ and $son$). 

These errors are inevitable because it can be shown that for rules of maximum length ($T$), the estimator of Neural LP provides a rank one estimator for the \textit{confidence value tensor} described in Section~\ref{sec:DRUM}. Thus according to Theorem~\ref{thr:theorem_1} the second highest confidence rule generated by Neural LP has to share a body atom with the first rule. For example the rule \texttt{brother(B,A) $\shortrightarrow$ son(B,A)}, even though incorrect, has a high confidence due to sharing the body atom \texttt{brother} with the highest confidence rule (first rule). Since DRUM does not have this limitation it can be seen that the same does not happen for rules mined by DRUM.  

\begin{table*}[h!]
\centering
\caption{Top 3 rules obtained by each system learned on \textit{family} dataset}
\resizebox{0.95\textwidth}{!}{%
\begin{tabular}{@{}clll@{}}
\toprule
Neural LP                 & \begin{tabular}[c]{@{}l@{}}brother(B, A) $\shortleftarrow$ sister(A, B)\\ brother(C, A) $\shortleftarrow$ sister(A, B), sister(B, C)\\ brother(C, A) $\shortleftarrow$ brother(A, B), sister(B, C)\end{tabular}                                                                                                               & \begin{tabular}[c]{@{}l@{}}\textit{\textcolor{red}{wife(C, A) $\shortleftarrow$ husband(A, B), husband(B, C)}}\\ wife(B, A) $\shortleftarrow$ husband(A, B)\\ \textit{\textcolor{red}{wife(C, A) $\shortleftarrow$ daughter(B, A), husband(B, C)}}\end{tabular}                                                                                                                               & \begin{tabular}[c]{@{}l@{}}son(C, A) $\shortleftarrow$ son(B, A), brother(C, B)\\ \textit{\textcolor{red}{son(B, A) $\shortleftarrow$ brother(B, A)}}\\ \textit{\textcolor{red}{son(C, A) $\shortleftarrow$ son(B, A), mother(B, C)}}\end{tabular}                     \\ \midrule
DRUM                     & \begin{tabular}[c]{@{}l@{}}brother(C, A) $\shortleftarrow$ nephew(A, B), uncle(B, C)\\ brother(C, A) $\shortleftarrow$ nephew(A, B), nephew(C, B)\\ brother(C, A) $\shortleftarrow$ brother(A, B), sister(B, C)\end{tabular} & \begin{tabular}[c]{@{}l@{}}wife(A, B) $\shortleftarrow$ husband(B, A)\\ wife(C, A) $\shortleftarrow$ mother(A, B), father(C, B)\\ wife(C, A) $\shortleftarrow$ son(B, A), father(C, B)\end{tabular} & \begin{tabular}[c]{@{}l@{}}son(C, A) $\shortleftarrow$ nephew(A, B), brother(B, C)\\ son(C, A) $\shortleftarrow$ brother(A, B), mother(C, B)\\ son(C, A) $\shortleftarrow$ brother(A, B), daughter(B, C)\end{tabular} \\ \bottomrule
\end{tabular}
}
\label{tab:rule_examples}
\end{table*}

\section{Conclusion}

We present DRUM, a fully differentiable rule mining algorithm which can be used for inductive and interpretable link prediction. We provide intuition about each part of the algorithm and demonstrate its empirical success for a variety of tasks and benchmark datasets.  

DRUM's objective function is based on the Open World Assumption of KBs and is trained using only positive examples. As a possible future work we would like to modify DRUM to take advantage of negative sampling. Negative sampling has shown empirical success in representation learning methods and it may also be useful here. Another direction for future work would be to investigate an adequate way of combining differential rule mining with representation learning techniques.

\section*{Acknowledgments}
We thank Kazem Shirani for his valuable feedback. We thank Anthony Colas and Sourav Dutta for their help in human assessment of the rules. This work is partially supported by NSF under IIS Award \#1526753 and DARPA under Award \#FA8750-18-2-0014 (AIDA/GAIA). 

\bibliographystyle{abbrvnat}
\bibliography{neurips_2019}

\clearpage

\appendix
\appendixpage
\addappheadtotoc

\section{Proof of Theorem~\ref{thr:theorem_1}}

\renewcommand\thetheorem{1}
\begin{theorem}
\label{thr:theorem_1_appendix}
If $R_o$, $R_s$ are two rules of length $T$ obtained by optimizing the objective related to $\Omega^I_{\scriptstyle_{H}}$, with confidence values $\alpha_o, \alpha_s$, then there exists $\ell$ rules of length $T$, $R_1, R_2, \cdots, R_{\ell}$, with confidence values $\alpha_1, \alpha_2, \cdots, \alpha_{\ell}$ such~that:
\begin{align*}
    & d(R_o, R_1) =d(R_{\ell}, R_s)=1 \quad \text{and} \quad  d(R_o, R_s) \leq \ell +1, \\
    & d(R_l, R_{l+1})=1 \quad \text{and} \quad \alpha_l \geq  \min(\alpha_o, \alpha_s) \quad \text{for} \quad 1 \leq l \leq \ell,
\end{align*}
where $d(.,.)$ is a distance between two rules of the same size defined as the number of mismatched atoms between them.
\end{theorem}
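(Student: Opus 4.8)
The plan is to exploit the rank-one structure of the confidence assignment produced by $\Omega^I_{\scriptstyle_{H}}$. Expanding the product $\prod_{i=1}^T\bigl(\sum_{k=0}^{|\mathcal{R}|} a_{i,k}\mb{A}_{B_k}\bigr)$, each rule of length $T$ corresponds to a choice of one index $k_i$ per position $i$, and the coefficient multiplying the monomial $\mb{A}_{B_{k_1}}\cdots\mb{A}_{B_{k_T}}$ is exactly $\prod_{i=1}^T a_{i,k_i}$. Hence I would first record that a rule $R=(k_1,\dots,k_T)$, viewed as a sequence over $\{0,1,\dots,|\mathcal{R}|\}$, receives confidence $\mathrm{conf}(R)=\prod_{i=1}^T a_{i,k_i}$, and that $d$ is the Hamming distance on these sequences. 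Writing $R_o=(o_1,\dots,o_T)$ and $R_s=(s_1,\dots,s_T)$, let $P$ be the set of positions where they differ and $m=|P|=d(R_o,R_s)$.

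Next I would reduce to the positive case: since the coefficients $a_{i,k}$ are nonnegative and $\alpha_o,\alpha_s>0$, every factor appearing in $R_o$ and $R_s$ — in particular $a_{i,o_i}$ and $a_{i,s_i}$ for $i\in P$ — is strictly positive, so the ratios $r_i\doteq a_{i,s_i}/a_{i,o_i}>0$ are well defined. Any rule obtained from $R_o$ by switching a subset $S\subseteq P$ of positions to the corresponding $R_s$-atom has confidence $\alpha_o\prod_{i\in S}r_i$, and switching all of $P$ recovers $\alpha_s$ because $\prod_{i\in P}r_i=\alpha_s/\alpha_o$. A monotone path that adds the positions of $P$ one at a time therefore yields a sequence of rules, consecutive ones at Hamming distance $1$, starting at $R_o$ and ending at $R_s$; taking $\ell=m-1$ gives intermediate rules $R_1,\dots,R_\ell$ with $d(R_o,R_1)=d(R_\ell,R_s)=1$ and $d(R_o,R_s)=m=\ell+1$.

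The crux is to order the switches so that every partial product stays $\ge\min(\alpha_o,\alpha_s)$; a naive left-to-right order can dip below it (for instance when a small factor is replaced first). The fix I would use is to apply all switches with $r_i\ge 1$ before all switches with $r_i<1$. During the first phase the running confidence only increases, hence stays $\ge\alpha_o$, peaking at some value $M$; during the second phase it only decreases from $M$ down to $\alpha_s$, hence stays $\ge\alpha_s$ (and note $M\ge\alpha_s$ since the remaining ratios are at most $1$). In both phases the value is $\ge\min(\alpha_o,\alpha_s)$, which gives $\alpha_l\ge\min(\alpha_o,\alpha_s)$ for every intermediate rule and completes the argument.

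I expect the main obstacle to be precisely this ordering step, together with justifying positivity of the individual coefficients so that the ratios and the two-phase monotonicity argument are valid; once the rank-one identity $\mathrm{conf}(R)=\prod_i a_{i,k_i}$ is in hand, the combinatorial path construction is then routine.
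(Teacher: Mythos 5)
Your proof is correct, but it takes a genuinely different route from the paper's. Both arguments start from the same rank-one identity $\mathrm{conf}(R)=\prod_i a_{i,k_i}$, but the paper never builds a direct path between $R_o$ and $R_s$: instead it defines $S^*$ as the coordinate-wise argmax rule (at each position $i$ take the atom with the largest coefficient $a^*_{i,r^*_i}$), walks each of $R_o$ and $R_s$ up to $S^*$ by single-atom swaps with \emph{non-decreasing} confidence, then reverses the second chain and concatenates; the bound $d(R_o,R_s)\leq \ell+1$ then comes from the triangle inequality applied to $d(R_o,S^*)+d(R_s,S^*)$. Your construction instead stays inside the sub-cube spanned by the two rules, switching only the $m=d(R_o,R_s)$ differing positions and ordering the switches by the ratio trick ($r_i\geq 1$ first, then $r_i<1$), which is exactly the right fix for the dip problem you identified. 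What your approach buys: you get $d(R_o,R_s)=\ell+1$ with equality (the paper only gets an inequality, since its detour through $S^*$ can be longer), and your intermediate rules are hybrids built solely from atoms of $R_o$ and $R_s$, which arguably makes the theorem's message --- that spurious rules mixing two genuine rules inevitably receive high confidence --- sharper. What the paper's approach buys: the monotone chains toward $S^*$ avoid any division, so no ratios need to be defined (though note that both proofs, yours explicitly and the paper's implicitly, rely on nonnegativity of the coefficients $a_{i,k}$ for ``larger factor $\Rightarrow$ larger product'' to hold). Your explicit flagging of that positivity assumption is a point in your write-up's favor rather than a gap.
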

\begin{proof}
Define:
\[
    \mb{a}^*\doteq \arg \max_{\mb{a}} O^I_{\scriptstyle_{H}}(\mb{a}) = 
        \sum_{(x,H,y) \in KG} \mb{v}_x^T \Omega^I_{\scriptstyle_{H}}(\mb{a}) \mb{v}_y,    
\]
where $O^I_H(\mb{a})$, is the objective related to the $\Omega^I_{\scriptstyle_{H}}(\bm{a})$ model. The confidence value of a rule of length $T$, for instance $S$, with body $B_{r_1} \land B_{r_2} \land \cdots \land B_{r_T}$, is
\[
    \alpha^*_S=\prod^{T}_{i=1} a^*_{i, r_i}.
\]
Therefore changing a body atom $B_{r_i}$ to $B_{r^{\prime}_i}$ in $S$, does not decrease the confidence value iff $a^*_{i, r_i} \leq a^*_{i, r^{\prime}_i}$. Let $a^*_{i, r^{*}_i}$ be the maximum element of the sequence $a^*_{i, 1}, \cdots, a^*_{i, |\mathcal{R}| }$. By consequently changing $B_{r_i}$ in $S$ to $B_{r^{*}_i}$ (for $i$'s where $r^{*}_i \neq r_i$) we obtain a sequence of rules of length T, with non-decreasing confidence values. The distance between any two consecutive elements in that sequence is 1. The last element of the sequence ($S^*$) is the rule with the highest confidence value among length $T$ rules, and the length of the sequence is $d(S, S^*)+1$.

To prove the theorem, it is sufficient to substitute $S$ with $R_o$ and $R_s$ to obtain two sequences of rules, with lengths $d(R_o, S^*)+1$ and $d(R_s, S^*)+1$, respectively. by reversing the sequence related to $R_s$ and concatenate it with the other sequence, we have a sequence of length $d(R_o, S^*)+d(R_s, S^*)+1$, satisfying the conditions required to prove the theorem (after excluding $R_o$ and $R_s$). 

The confidences values for the rules in the sequence satisfy the condition  $\alpha_l \geq \min(\alpha_o, \alpha_s)$, because all the rules in the sequence related to $R_s$ ($R_o$) and $S^*$ have larger or equal confidence value to $R_s$ ($R_o$). And since $d(.,.)$ is a valid distance function it satisfies the triangle inequality; therefore $d(R_o, R_s) \leq d(R_o, S^*)+d(R_s, S^*)$, which implies $d(R_s, R_o) \leq \ell +1 $.    
\end{proof}

\section{Extension to the table~\ref{tab:srl_results}: Comparison with other reasoning methods}

\begin{table}[!h]
\centering
\caption{Statistical Relation Learning comparison with other reasoning methods.}
\begin{tabular}{lrrrrrrrr} 
\toprule
\multicolumn{1}{c}{\multirow{2}{*}{Datasets}} & \multicolumn{4}{c}{UMLS}                                                                                        & \multicolumn{4}{c}{Kinship}                                                                                      \\
\multicolumn{1}{c}{}                          & \multicolumn{1}{c}{MRR} & \multicolumn{1}{c}{Hits@1} & \multicolumn{1}{c}{Hits@3} & \multicolumn{1}{c}{Hits@10} & \multicolumn{1}{c}{MRR} & \multicolumn{1}{c}{Hits@1} & \multicolumn{1}{c}{Hits@3} & \multicolumn{1}{c}{Hits@10}  \\ 
\hline
ConvE                                         & 0.94                    & 0.92                       & 0.96                       & 0.99                        & 0.83                    & 0.98                       & 0.92                       & 0.98                         \\
ComplEx                                       & 0.89                    & 0.82                       & 0.96                       & 1                           & 0.81                    & 0.7                        & 0.89                       & 0.98                         \\
MINERVA                                       & 0.82                    & 0.73                       &  0.90                       & 0.97                        &  0.72                    &  0.60                       & 0.81                       &  0.92                         \\
\hline
NTP$^1$                                           & 0.88                    & 0.82                       & 0.92                       & 0.97                        & 0.6                     & 0.48                       & 0.7                        & 0.78                         \\
NTP-$\lambda^1$                    & 0.93                    & 0.87                       & 0.98                       & 1                           & 0.8                     & 0.76                       & 0.82                       & 0.89                         \\
NTP 2.0                                       & 0.76                    & 0.68                       & 0.81                       & 0.88                        & 0.65                    & 0.57                       & 0.69                       & 0.81                         \\
DRUM                                          & 0.81                    & 0.67                       & 0.94                       & 0.98                        & 0.61                    & 0.46                       & 0.71                       & 0.91                         \\
\bottomrule
\end{tabular}
\end{table}

\clearpage

\section{Extension to the table~\ref{tab:tran}: WN18 dataset results}
\begin{table}[!h]
    \centering  
    \caption{Transductive link prediction results}
    \label{tab:tran_appendix}
    \begin{tabular}{lccccc}
        \toprule
         & \multicolumn{4}{c}{\textbf{WN18}} 
        \\
        \cmidrule{2-5} 
         & & \multicolumn{3}{c}{Hits}
        \\
        \cmidrule{3-5} 
         & MRR & @10 & @3 & @1 
        \\
        \midrule
        DistMult  & .822 & .936 & .914 & .728 
        \\
        ComplEx  & .941 & .947 & .936 & .936 
        \\
        Gaifman  & --    & .939 &  --   & .761   
        \\
        R-GCN  & .814 & .964 & .929 & .697 
        \\
        TransE & .495 & .943 & .888 & .113 
        \\
        ConvE  & .943 & .956 & .946 &.935 \\
        \midrule
        
        Neural LP  & .94 & .945 & -- & -- 
        \\
        DRUM & .944 & .954 & .943 & .939  \\ 
        \bottomrule
    \end{tabular}
\end{table}

\section{Extension to the table~\ref{tab:rule_examples}: top 10 rules obtained by each system}

\large{
    
\begin{table}[h!]
\caption{Examples of top rules obtained by each system learned on \textit{family} dataset}
\label{tab:rule_examples_appendix}
\centering
\resizebox{1.0\textwidth}{!}{%
\begin{tabular}{@{}|c|l|l|l|@{}}
\toprule
\multicolumn{1}{|l|}{Head} & brother(., .)                                                                                                                                                                                                                                                                                                                                                                                                                                                                                                                                                                                     & wife(., .)                                                                                                                                                                                                                                                                                                                                                                                                                                                                                                                                & son(., .)                                                                                                                                                                                                                                                                                                                                                                                                                                                                                                                                                                                                       \\ \midrule
NeuralLP                 & \begin{tabular}[c]{@{}l@{}}(B, A) $\shortleftarrow$ inv\_sister(B, A)\\ (C, A) $\shortleftarrow$ inv\_sister(B, A), inv\_sister(C, B)\\ (C, A) $\shortleftarrow$ inv\_brother(B, A), inv\_sister(C, B)\\ (B, A) $\shortleftarrow$ inv\_brother(B, A)\\ (C, A) $\shortleftarrow$ inv\_sister(B, A), inv\_brother(C, B)\\ (C, A) $\shortleftarrow$ inv\_brother(B, A), inv\_brother(C, B)\\ (B, A) $\shortleftarrow$ son(B, A)\\ (C, A) $\shortleftarrow$ inv\_sister(B, A), son(C, B)\\ N/A\\ N/A\end{tabular}                                                                                                               & \begin{tabular}[c]{@{}l@{}}(C, A) $\shortleftarrow$ inv\_husband(B, A), inv\_husband(C, B)\\ (B, A) $\shortleftarrow$ inv\_husband(B, A)\\ (C, A) $\shortleftarrow$ daughter(B, A), inv\_husband(C, B)\\ (C, A) $\shortleftarrow$ wife(B, A), inv\_husband(C, B)\\ (C, A) $\shortleftarrow$ inv\_husband(B, A), mother(C, B)\\ (C, A) $\shortleftarrow$ mother(B, A), inv\_husband(C, B)\\ N/A\\ N/A\\ N/A\\ N/A\end{tabular}                                                                                                                               & \begin{tabular}[c]{@{}l@{}}(C, A) $\shortleftarrow$ son(B, A), brother(C, B)\\ (B, A) $\shortleftarrow$ brother(B, A)\\ (C, A) $\shortleftarrow$ son(B, A), inv\_mother(C, B)\\ (C, A) $\shortleftarrow$ inv\_mother(B, A), brother(C, B)\\ (B, A) $\shortleftarrow$ inv\_mother(B, A)\\ (C, A) $\shortleftarrow$ inv\_mother(B, A), inv\_mother(C, B)\\ (C, A) $\shortleftarrow$ inv\_husband(B, A), brother(C, B)\\ (C, A) $\shortleftarrow$ inv\_father(B, A), brother(C, B)\\ (C, A) $\shortleftarrow$ inv\_husband(B, A), inv\_mother(C, B)\\ (C, A) $\shortleftarrow$ inv\_father(B, A), inv\_mother(C, B)\end{tabular}                     \\ \midrule
DRUM                     & \begin{tabular}[c]{@{}l@{}}(C, A) $\shortleftarrow$ nephew(A, B), uncle(B, C)\\ (C, A) $\shortleftarrow$ nephew(A, B), inv\_nephew(B, C)\\ (C, A) $\shortleftarrow$ brother(A, B), sister(B, C)\\ (C, A) $\shortleftarrow$ brother(A, B), inv\_sister(B, C)\\ (C, A) $\shortleftarrow$ brother(A, B), inv\_brother(B, C)\\ (C, A) $\shortleftarrow$ brother(A, B), brother(B, C)\\ (C, A) $\shortleftarrow$ nephew(A, B), inv\_niece(B, C)\\ (C, A) $\shortleftarrow$ nephew(A, B), aunt(B, C)\\ (C, A) $\shortleftarrow$ inv\_uncle(A, B), uncle(B, C)\\ (C, A) $\shortleftarrow$ inv\_uncle(A, B), inv\_nephew(B, C)\end{tabular} & \begin{tabular}[c]{@{}l@{}}(A, B) $\shortleftarrow$ inv\_husband(A, B)\\ (C, A) $\shortleftarrow$ mother(A, B), inv\_father(B, C)\\ (C, A) $\shortleftarrow$ inv\_son(A, B), inv\_father(B, C)\\ (C, A) $\shortleftarrow$ mother(A, B), son(B, C)\\ (C, A) $\shortleftarrow$ inv\_son(A, B), son(B, C)\\ (C, A) $\shortleftarrow$ mother(A, B), daughter(B, C)\\ (C, A) $\shortleftarrow$ inv\_son(A, B), daughter(B, C)\\ (C, A) $\shortleftarrow$ inv\_daughter(A, B), inv\_father(B, C)\\ (C, A) $\shortleftarrow$ inv\_daughter(A, B), son(B, C)\\ N/A\end{tabular} & \begin{tabular}[c]{@{}l@{}}(C, A) $\shortleftarrow$ nephew(A, B), brother(B, C)\\ (C, A) $\shortleftarrow$ brother(A, B), inv\_mother(B, C)\\ (C, A) $\shortleftarrow$ brother(A, B), daughter(B, C)\\ (C, A) $\shortleftarrow$ brother(A, B), son(B, C)\\ (C, A) $\shortleftarrow$ brother(A, B), inv\_father(B, C)\\ (C, A) $\shortleftarrow$ inv\_sister(A, B), inv\_mother(B, C)\\ (C, A) $\shortleftarrow$ inv\_sister(A, B), daughter(B, C)\\ (C, A) $\shortleftarrow$ inv\_sister(A, B), son(B, C)\\ (C, A) $\shortleftarrow$ inv\_sister(A, B), inv\_father(B, C)\\ (C, A) $\shortleftarrow$ inv\_uncle(A, B), brother(B, C)\end{tabular} \\ \bottomrule
\end{tabular}
}

\end{table}

}

\end{document}